\newcommand{\PreserveBackslash}[1]{\let\temp=\\#1\let\\=\temp}
\newcolumntype{C}[1]{>{\PreserveBackslash\centering}p{#1}}
\newcolumntype{R}[1]{>{\PreserveBackslash\raggedleft}p{#1}}
\newcolumntype{L}[1]{>{\PreserveBackslash\raggedright}p{#1}}
\newtheorem{theorem}{Theorem}[section]
\newtheorem{proposition}[theorem]{Proposition}
\crefname{section}{Sec.}{Secs.}
\Crefname{section}{Section}{Sections}
\Crefname{table}{Table}{Tables}
\crefname{table}{Tab.}{Tabs.}
\newcommand{\rom}[1]{\uppercase\expandafter{\romannumeral #1\relax}}
\newcommand{\fref}[1]{Figure~\ref{#1}}
\newcommand{\sref}[1]{Section~\ref{#1}}
\newcommand{\tref}[1]{Table~\ref{#1}}
\newcommand{\appref}[1]{Appendix~\ref{#1}}
\newcommand{\myparagraph}[1]{\noindent\textbf{#1}~}
\begin{document}

%%%%%%%%% TITLE - PLEASE UPDATE
\title{Lifelong Graph Learning}

% \author{First Author\\
% Institution1\\
% Institution1 address\\
% {\tt\small firstauthor@i1.org}
% % For a paper whose authors are all at the same institution,
% % omit the following lines up until the closing ``}''.
% % Additional authors and addresses can be added with ``\and'',
% % just like the second author.
% % To save space, use either the email address or home page, not both
% \and
% Second Author\\
% Institution2\\
% First line of institution2 address\\
% {\tt\small secondauthor@i2.org}
% }
% \author{%
%   Chen Wang\\
%   \and
%   Yuheng Qiu\\
%   \and
%   Dasong Gao\\
%   \and
%   Sebastian Scherer\\
%   \and
%   Robotics Institute\\
%   Carnegie Mellon University\\
%   Pittsburgh, PA 15213, USA \\
%   {\tt\small chenwang@dr.com, \{yuengq, dasongg, basti\}@andrew.cmu.edu} \\
% }
\author{%
  Chen Wang$^{\dagger}$\\
%   Robotics Institute\\
%   Carnegie Mellon University\\
%   Pittsburgh, PA 15213 \\
  {\tt\small chenwang@dr.com} \\
  \and
  Yuheng Qiu$^{\dagger}$\\
%   Robotics Institute\\
%   Carnegie Mellon University\\
%   Pittsburgh, PA 15213 \\
  {\tt\small yuhengq@andrew.cmu.edu} \\
  \and
  Dasong Gao$^{\dagger}$\\
%   Machine Learning Department\\
%   Carnegie Mellon University\\
%   Pittsburgh, PA 15213 \\
  {\tt\small dasongg@andrew.cmu.edu} \\
  \and
  Sebastian Scherer$^{\dagger}$\\
%   Robotics Institute\\
%   Carnegie Mellon University\\
%   Pittsburgh, PA 15213 \\
  {\tt\small basti@cmu.edu} \\
  \and
  $^{\dagger}$The Robotics Institute, Carnegie Mellon University, Pittsburgh, PA 15213\\
}
\maketitle

%%%%%%%%% ABSTRACT
\begin{abstract}
Graph neural networks (GNN) are powerful models for many graph-structured tasks. Existing models often assume that the complete structure of the graph is available during training. In practice, however, graph-structured data is usually formed in a streaming fashion so that learning a graph continuously is often necessary. In this paper, we bridge GNN and lifelong learning by converting a continual graph learning problem to a regular graph learning problem so GNN can inherit the lifelong learning techniques developed for convolutional neural networks (CNN). We propose a new topology, the feature graph, which takes features as new nodes and turns nodes into independent graphs. This successfully converts the original problem of node classification to graph classification. In the experiments, we demonstrate the efficiency and effectiveness of feature graph networks (FGN) by continuously learning a sequence of classical graph datasets. We also show that FGN achieves superior performance in two applications, \ie, lifelong human action recognition with wearable devices and feature matching. To the best of our knowledge, FGN is the first method to bridge graph learning and lifelong learning via a novel graph topology. Source code is available at \url{https://github.com/wang-chen/LGL}.
\end{abstract}

%%%%%%%%% BODY TEXT
\section{Introduction}

Graph neural networks (GNN) have received increasing attention and proved useful for many tasks with graph-structured data, such as citation, social, and protein networks \cite{wu2020comprehensive}.
However, graph data is sometimes formed in a streaming fashion and real-world datasets are continuously evolving over time. Thus, learning a streaming graph is expected in many cases \cite{wang2020streaming}.
For example, in a social network, the number of users often grows over time and we expect that the model can learn continually with new users.
In this paper, we extend graph neural networks to lifelong learning, which is also known as continual or incremental learning \cite{lopez2017gradient}.
% Concretely, lifelong learning aims at breaking the barrier between training and testing, and the goal is to develop algorithms that are able to keep updating the model parameters over time \cite{aljundi2019task}.
% In other words, a model in lifelong learning are expected to gradually accumulate knowledge from a data (task) stream, where each sample is only be presented to the model once.

Lifelong learning often suffers from ``catastrophic forgetting'' if the models are simply updated with new samples \cite{rumelhart1986learning}.
Although some strategies have been developed to alleviate the forgetting problem for convolutional neural networks (CNN), it is still difficult to directly apply them to graph networks.
This is because in the lifelong learning setting, the graph size can increase over time and we have to drop old data to save memory for learning new knowledge.
However, the existing graph model cannot directly overcome this difficulty.
For example, graph convolutional networks (GCN) require the entire graph for training \cite{kipf2017semi}.
SAINT \cite{zeng2020graphsaint} requires pre-processing for the entire dataset.
Sampling strategies \cite{hamilton2017inductive,chen2018fastgcn,zeng2020graphsaint} easily forget old knowledge when learning new knowledge.
% were proposed, but they are also difficult to directly apply to lifelong learning 
% This is because existing graph models such as graph convolutional networks (GCN) \cite{kipf2017semi} require the entire graph for training.
% This is infeasible for lifelong learning, since the graph size can increase over time and we have to drop off old data or samples to learn new knowledge.
% To reduce the memory consumption, some graph sampling strategies were proposed, but they are also difficult to directly apply to lifelong learning \cite{hamilton2017inductive,chen2018fastgcn,zeng2020graphsaint} as they require a pre-processing of the entire graph \cite{zeng2020graphsaint}.
% , since the samples in a data stream \cite{zenke2017continual} are often not identically and independently distributed (\iid).
% As far as we know, lifelong graph learning is still underexplored but has broad potential applications \cite{wu2020comprehensive}.

\begin{figure}[t]
    \centering
    \subfloat[Regular graph $\mathcal{G}$.\label{fig:regular-graph}]{
    \includegraphics[height=0.37\linewidth]{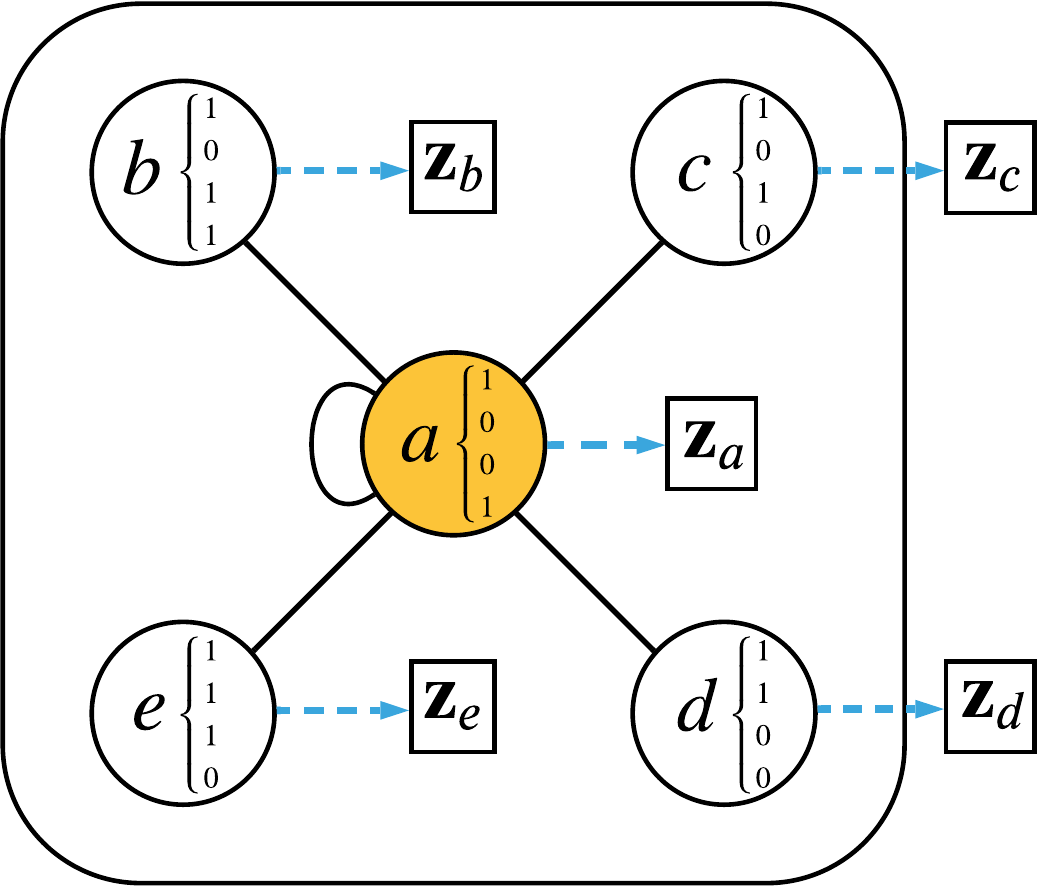}}
    \hfill
    \subfloat[Feature graph $\mathcal{G}^\mathcal{F}$.\label{fig:feature-graph}]{
    \includegraphics[height=0.37\linewidth]{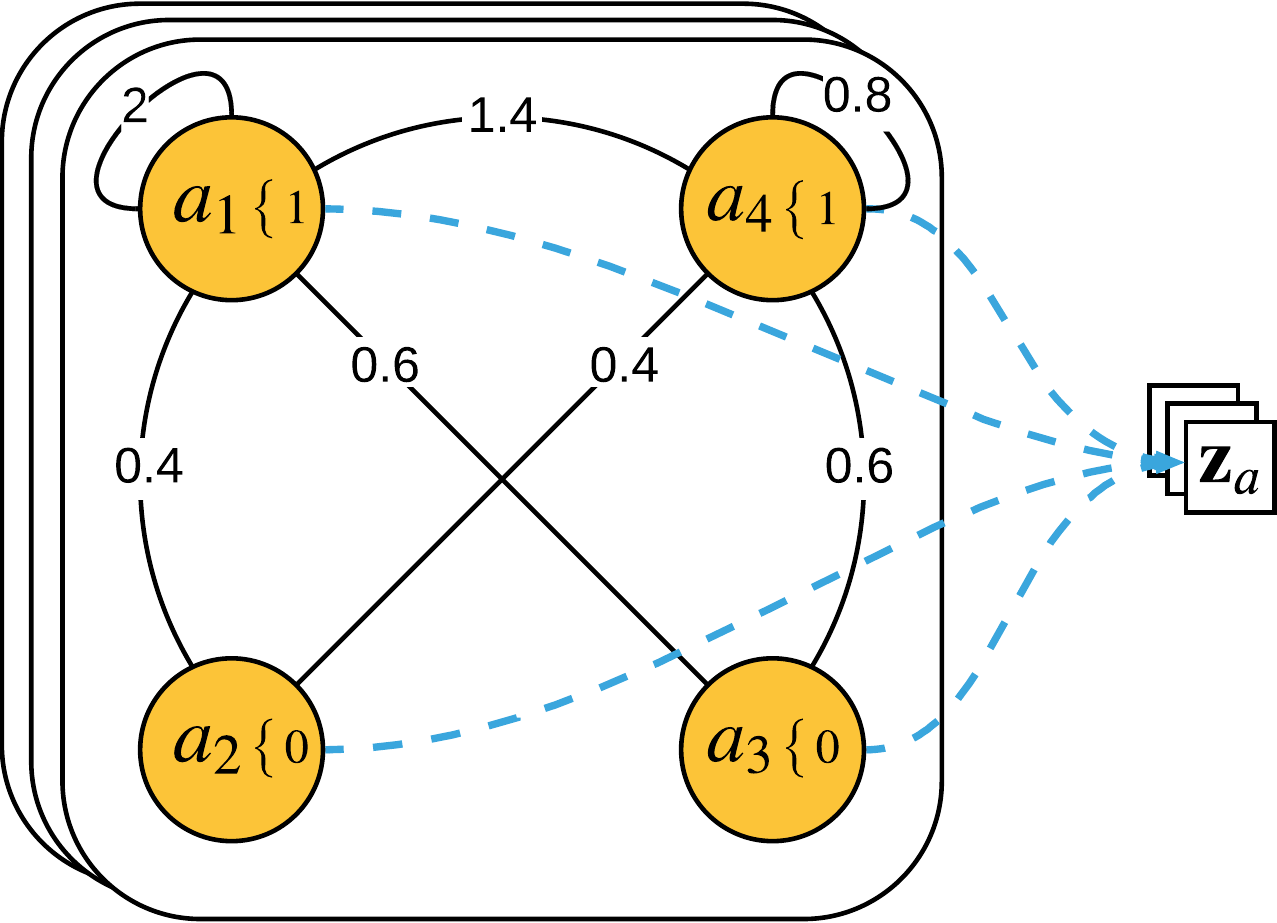}}
    \caption{We introduce feature graph network (FGN) for lifelong graph learning. A feature graph takes the features as nodes and turns nodes into graphs, resulting in a graph predictor instead of the node predictor. This makes the lifelong learning techniques for CNN applicable to GNN, as the new nodes in a regular graph become individual training samples.
    Take the node $a$ with label $\mathbf{z}_a$ in the regular graph $\mathcal{G}$ as an example, its features $\mathbf{x}_a = [1,0,0,1]$ are nodes $\left\{a_1,a_2,a_3,a_4\right\}$ in feature graph $\mathcal{G}^\mathcal{F}_a$.
    The feature adjacency is established via feature cross-correlation between $a$ and its neighbors $\mathcal{N}(a) = \left\{a,b,c,d,e\right\}$ to model feature ``interaction.''}
    \label{fig:motivation}
\end{figure}

Recall that regular CNNs are trained in a mini-batch manner where the model can take samples as independent inputs \cite{krizhevsky2012imagenet}. Our question is: can we convert a graph task into a traditional CNN-like classification problem, so that (\rom{1}) nodes can be predicted independently and (\rom{2}) the lifelong learning techniques developed for CNN can be easily adopted for GNN?
This is not straightforward as node connections cannot be modeled by a regular CNN-like classification model. To solve this problem, we propose to construct a new graph topology, the feature graph in \fref{fig:motivation}, to bridge GNN to lifelong learning.
It takes features as nodes and turns nodes into graphs.
This converts node classification to graph classification where the node increments become independent training samples, enabling natural mini-batch training.

% We do not focus on specific lifelong learning techniques for graph models, instead, we overcome the difficulty of directly applying lifelong learning techniques developed for CNN to GNN.
% In the experiments, we will demonstrate its effectiveness and efficiency using two popular strategies, \ie, a rehearsal method, gradient-based sample selection \cite{aljundi2019gradient} and a non-rehearsal method, elastic weight consolidation (EWC) \cite{kirkpatrick2017overcoming}.
% To the best of our knowledge, it is one of the first practical methods to overcome the difficulty of directly applying lifelong learning techniques to graph networks.
The contribution of this paper includes: (1) We introduce a novel graph topology, \ie, feature graph, to convert a problem of growing graph to an increasing number of training samples, which makes existing lifelong learning techniques developed for CNN applicable to GNN. (2) We take the cross-correlation of neighbor features as the feature adjacency matrix, which explicitly models feature ``interaction,'' that is crucial for many graph-structured tasks. (3) Feature graph is of constant computational complexity with the increased learning tasks. We demonstrate its efficiency and effectiveness by applying it to classical graph datasets. (4) We also demonstrate its superiority in two applications, \ie, distributed human action recognition based on subgraph classification and feature matching based on edge classification.

%-------------------------------------------------------------------------
\section{Related Work}\label{sec:related-work}

% There are limited lifelong graph learning methods, and thus we will present the related work on lifelong learning and graph neural networks, respectively.

\subsection{Lifelong Learning}

\myparagraph{Non-rehearsal Methods}
Lifelong learning methods in this category do not preserve any old data.
To alleviate the forgetting problem, progressive neural networks \cite{rusu2016progressive} leveraged prior knowledge via lateral connections to previously learned features.
Learning without forgetting (LwF) \cite{li2017learning} introduced a knowledge distillation loss \cite{hinton2015distilling} to neural networks, which encouraged the network output for
new classes to be close to the original outputs.
Distillation loss was also applied to learning object detectors incrementally \cite{shmelkov2017incremental}.
Learning without memorizing (LwM) \cite{dhar2019learning} extended LwF by adding an attention distillation term based on attention maps for retaining information of the old classes.

EWC \cite{kirkpatrick2017overcoming} remembered old tasks by slowing down learning on important weights.
RWalk \cite{chaudhry2018riemannian} generalized EWC and improved weight consolidation by adding a KL-divergence-based regularization.
Memory aware synapses (MAS) \cite{aljundi2018memory} computed an importance value for each parameter in an unsupervised manner based on the sensitivity of output function to parameter changes.
\cite{wei2019lifelong} presented an embedding framework for dynamic attributed network based on parameter regularization.
A sparse writing protocol is introduced to a memory module \cite{wang2020visual}, ensuring that only a few memory spaces is affected during training.

\myparagraph{Rehearsal Methods}
Rehearsal lifelong learning methods can be roughly divided into rehearsal with synthetic data or rehearsal with exemplars from old data \cite{robins1995catastrophic}.
To ensure that the loss of exemplars does not increase, gradient episodic memory (GEM) \cite{lopez2017gradient} introduced orientation constraints during gradient updates.
Inspired by GEM, \cite{aljundi2019gradient} selected exemplars with a maximal cosine similarity of the gradient orientation.
iCaRL \cite{rebuffi2017icarl} preserved a subset of images with a herding algorithm \cite{welling2009herding} and included the subset when updating the network for new classes.
EEIL \cite{castro2018end} extended iCaRL by learning the classifier in an end-to-end manner.
\cite{wu2019large} further extended iCaRL by updating the model with class-balanced exemplars.
Similarly, \cite{hou2019learning,belouadah2019il2m} further added constraints to the loss function to mitigate the effect of imbalance.
To reduce the memory consumption of exemplars, \cite{iscen2020memory} applied the distillation loss to feature space without having to access the corresponding images.
% \cite{zhou2020continual} explored the effects of several node selection algorithm.
Rehearsal approaches with synthetic data based on generative adversary networks (GAN) were used to reduce the dependence on old data \cite{shin2017continual,wu2018memory,he2018exemplar,xiang2019incremental}.

\subsection{Graph Neural Networks}
Graph neural networks have been widely used to solve problems with graph-structured data \cite{zhou2018graph}.
The spectral network extended convolution to graph problems \cite{bruna2013spectral}.
Graph convolutional network (GCN) \cite{kipf2017semi} alleviated over-fitting on local neighborhoods via the Chebyshev expansion.
% This obtained a good generalization ability and inspired more recent work.
To identify the importance of neighborhood features, graph attention network (GAT) \cite{velickovic2018graph} added an attention mechanism into GCN, further improving the performance on citation networks and the protein-protein interaction dataset.

GCN and its variants require the entire graph during training, thus they cannot scale to large graphs. To solve this problem and train GNN with mini-batches, a sampling method, SAGE \cite{hamilton2017inductive} is introduced to learn a function to generate node embedding by sampling and aggregating neighborhood features.
JK-Net \cite{xu2018representation} followed the same sampling strategy and demonstrated a significant accuracy improvement on GCN with jumping connections.
DiffPool \cite{ying2018hierarchical} learned a differentiable soft cluster assignment to map nodes to a set of clusters, which then formed a coarsened input for the next layer.
Ying \etal \cite{ying2018graph} designed a training strategy that relied on harder-and-harder training examples to improve the robustness and convergence speed of the model.
% S-GCN \cite{chen2017stochastic} developed control variate-based algorithms to sample small neighbor sizes and used historical activation in the previous layer to avoid re-evaluation.

FastGCN \cite{chen2018fastgcn} applied importance sampling to reduce variance and perform node sampling for each layer independently, resulting in a constant sample size in all layers.
\cite{huang2018adaptive} sampled lower layer conditioned on the top one ensuring higher accuracy and fixed-size sampling.
Subgraph sampling techniques were also developed to reduce memory consumption. \cite{chiang2019cluster} sampled a block of nodes in a dense subgraph identified by a clustering algorithm and restricted the neighborhood search within the subgraph.
SAINT \cite{zeng2020graphsaint} constructed mini-batches by sampling the training graph. 
% A complete GCN with a fixed number of well-connected nodes is built from the properly sampled subgraph during each iteration.

Nevertheless, most of the sampling techniques still require a pre-processing of the entire graph to determine the sampling process or require a complete graph structure, which makes those algorithms not directly applicable to lifelong learning.
In this paper, we hypothesize that a different graph structure is required for lifelong learning, and the new structure is not necessary to maintain its original meaning.
We noticed that there is some recent work focusing on continual graph learning, but they employ different formulations.
For example, several exemplar selection methods are tested on ER-GNN \cite{zhou2020continual}.
A weight preserving method is introduced to growing graph \cite{liu2020overcoming}.
A combined strategy of regularization and data rehearsal is introduced to learn streaming graphs in \cite{wang2020streaming}.
To overcome the incomplete structure, \cite{galke2020incremental} learns a temporal graph in sliding window.

\section{Problem Formulation} \label{sec:formulation}

We start by defining regular graph learning before lifelong graph learning for completeness.
An attribute (regular) graph is defined as $\mathcal{G}=(\mathcal{V},\mathcal{E})$, where $\mathcal{V}$ is the set of nodes and $\mathcal{E}\subseteq\left\{\left\{v_a, v_b\right\} | (v_a, v_b)\in \mathcal{V}^2  \right\}$ is the set of edges.
Each node $v\in \mathcal{V}$ is associated with a multi-channel feature vector $\mathbf{x}_v\in\mathcal{X} \subset \mathbb{R}^{F\times C}$ and a target $\mathbf{z}_v\in \mathcal{Z}$.
Each edge $e\in \mathcal{E}$ is associated with a vector $\mathbf{w}_e \in \mathcal{W} \subset\mathbb{R}^{W}$.
In regular graph learning, we learn a predictor $f$ to associate a node $\mathbf{x}_v, v \in \mathcal{V}'$ with a target $\mathbf{z}_v$, given graph $\mathcal{G}$, node features $\mathcal{X}$, edge vectors $\mathcal{W}$, and part of the targets $\mathbf{z}_v, v \in \mathcal{V}\setminus\mathcal{V}'$.

In lifelong graph learning, we have the same objective but can only obtain the graph-structured data from a data continuum
$\mathcal{G}_L=\{(\mathbf{x}_i$, $t_i$, $\mathbf{z}_i$, $\mathcal{N}_{k=1:K}(\mathbf{x}_i)$, $\mathcal{W}_{k=1:K}(\mathbf{x}_i))_{i=1:N}\}$,
where each item consists of a node feature $\mathbf{x}_i\in\mathcal{X}$, a task descriptor $t_i \in \mathcal{T}$, a target vector $\mathbf{z}_i \in \mathcal{Z}_{t_i}$, a $k$-hop neighbor set $\mathcal{N}_{k=1:K}(\mathbf{x}_i)$, and an edge vector set $\mathcal{W}_{k=1:K}(\mathbf{x}_i)$ associated with the $k$-hop neighbors.
For simplicity, we will use the symbol $\mathcal{N}(\mathbf{x}_i)$ to denote the available neighbor set and their edges.
We assume that every item $(\mathbf{x}_i,\mathcal{N}(\mathbf{x}_i), t_i, \mathbf{z}_i)$ satisfies $(\mathbf{x}_i, \mathcal{N}(\mathbf{x}_i), \mathbf{z}_i) \sim P_{t_i}(\mathcal{X}, \mathcal{N}(\mathcal{X}), \mathcal{Z})$, where $ P_{t_i}$ is a probability distribution of a single learning task.
In lifelong graph learning, we will observe, item by item, the continuum of the graph-structured data as
\begin{equation}\label{eq:continuum}
(\mathbf{x}_1,\mathcal{N}(\mathbf{x}_1), t_1, \mathbf{z}_1), \dots,
%(\mathbf{x}_i,\mathcal{N}(\mathbf{x}_i), t_i, \mathbf{z}_i),\dots,
(\mathbf{x}_N,\mathcal{N}(\mathbf{x}_N), t_N, \mathbf{z}_N)
\end{equation}
While observing \eqref{eq:continuum}, our goal is to learn a predictor $f_L$ to associate a test sample $(\mathbf{x},  \mathcal{N}(\mathbf{x}),t)$ with a target $\mathbf{z}$ such that $(\mathbf{x}, \mathcal{N}(\mathbf{x}), \mathbf{z}) \sim P_t$.
Such test sample can belong to a task observed in the past, the current task, or a task observed (or not) in the future.
The task descriptors $t_i$ is defined for compatibility with lifelong learning that requires them \cite{lopez2017gradient} but is not used in the experiments.

Note that samples are \textbf{not} drawn locally independent and identically distributed (\iid) from a fixed probability distribution, since we don't know the task boundary.
% , and a long sequence may be observed before switching to the next task.
In the continuum, we only know the label of $\mathbf{x}_i$, but have no information about the labels of its neighbors $\mathcal{N}(\mathbf{x}_i)$.
The items in \eqref{eq:continuum} are unavailable once they are observed and dropped. This is in contrast to the settings in \cite{wang2020streaming}, where all historical data are available during training.
As shown in the experiments, lifelong graph learning in practice often requires that the number of GNN layers $L$ is larger than the availability of $K$-hop neighbors, \ie, $L>K$, which also leads many existing graph models inapplicable.

\section{Feature Graph Network} \label{sec:graph}

To better show the relationship with a regular graph, we first review GCN \cite{kipf2017semi}.
Given an graph $\mathcal{G}=(\mathcal{V},\mathcal{E})$ described in \sref{sec:formulation}, the stacked node features can be written as $\mathbf{X}=\left[\mathbf{x}_1, \mathbf{x}_2, \cdots, \mathbf{x}_N\right]^{\rm T} \in\mathbb{R}^{N\times FC}$, where $\mathbf{x}_i\in\mathbb{R}^{FC}$ is a vectorized node feature. The GCN takes feature channel as $C=1$, so that $\mathbf{x}_i\in\mathbb{R}^{F}, \mathbf{X}\in\mathbb{R}^{N\times F}$. The $l$-th graph convolutional layer is defined as
\begin{equation}\label{eq:gcn}
\mathbf{X}_{(l+1)} = \sigma \left( \hat{\mathbf{A}} \cdot \mathbf{X}_{(l)} \cdot  \mathbf{W} \right),
\end{equation}
where $\sigma(\cdot)$ is an activation function, $\mathbf{W} \in \mathbb{R}^{F_{(l)}\times F_{(l+1)}}$ is a learnable parameter, and
$\hat{\mathbf{A}} \in \mathbb{R}^{N\times N}$ is the normalized adjacency (refer to \cite{kipf2017semi} for details).
% \begin{subequations}
%   \begin{align}
%     &\hat{\mathbf{A}} = \mathbf{D}^{-\nicefrac{1}{2}} \tilde{\mathbf{A}} \mathbf{D}^{-\nicefrac{1}{2}}, \\
%     &\mathbf{D}_{[i,i]} = \sum_k \tilde{\mathbf{A}}_{[i,k]},\\
%     &\tilde{\mathbf{A}} = \mathbf{A} + \mathbf{I}_N.
%   \end{align}
% \end{subequations}
% \begin{equation}
%     \hat{\mathbf{A}} = \mathbf{D}^{-\nicefrac{1}{2}} \tilde{\mathbf{A}} \mathbf{D}^{-\nicefrac{1}{2}},~ \mathbf{D}_{[i,i]} = \sum_k \tilde{\mathbf{A}}_{[i,k]},~ \tilde{\mathbf{A}} = \mathbf{A} + \mathbf{I}_N.
% \end{equation}
% For unweighted graph, $\mathbf{A}_{[i,j]}=1$ if $ \left( i, j  \right) \in \mathcal{E}$ and $\mathbf{A}_{[i,j]}=0$ otherwise.
% The node features $\mathbf{x}_{i=1:N}$ share the learnable weights $\mathbf{W}$, thus it can be regarded as a special convolution.
A graph convolutional layer doesn't change the number of nodes (rows of $\mathbf{X}$) but changes the feature dimension from $F_{(l)}$ to $F_{(l+1)}$.

GCN has been applied to many graph-structured tasks due to its simplicity and good generalization ability.
However, the problems of GCN are also obvious.
Besides the forgetting problem in lifelong learning, its node features in the next layer is a linear combination of the current layer. Thus, GCN and its variants cannot directly model the feature ``interaction''.
% Moreover, to propagate node features across layers, a full adjacency matrix $\mathbf{A}$ is required, which means that the entire graph topology has to be presented to the model. This limits its usage to large graphs and makes it inapplicable to lifelong learning either. Although many sampling techniques are proposed to scale GCN to large graphs, they usually require pre-processing of the entire dataset \cite{zeng2020graphsaint}, making them not directly applicable to lifelong learning.
% We argue that such difficulty is caused by the inappropriate graph topology.
To this end, we introduce the feature graphs by defining feature nodes and feature adjacency matrix.

\subsection{Feature Nodes}

Recall that each node $v$ in a regular graph $\mathcal{G}=(\mathcal{V},\mathcal{E})$ is associated with a multi-channel feature vector $\mathbf{x}  = \left[ \mathbf{x}_{[1,:]}^{\rm T}, \cdots, \mathbf{x}_{[F,:]}^{\rm T}  \right]   \in\mathbb{R}^{F\times C}$, where $\mathbf{x}_{[i,:]}$ is the $i$-th feature (row) of $\mathbf{x}$.
An attribute feature graph takes the features of a regular graph as nodes.
It can be defined as $\mathcal{G}^\mathcal{F}=(\mathcal{V}^\mathcal{F},\mathcal{E}^\mathcal{F})$, where
each node $v^\mathcal{F}\in \mathcal{V}^\mathcal{F}$ is associated with a feature $\mathbf{x}_{[i,:]}^{\rm T}$ and will be denoted as $\mathbf{x}_i^\mathcal{F}$.
Intuitively, the number of nodes in the feature graph is the feature dimension in the regular graph, \ie, $\left|\mathcal{V}^\mathcal{F}\right|=F$, and the feature dimension in feature graph is the feature channel in a regular graph, \ie, $\mathbf{x}_i^\mathcal{F}\in\mathbb{R}^C$. Therefore, we define the feature nodes for feature graph as
\begin{equation}\label{eq:feature-nodes}
\mathcal{V}^\mathcal{F} = \left\{ \mathbf{x}_1^\mathcal{F}, \mathbf{x}_2^\mathcal{F}, \cdots, \mathbf{x}_i^\mathcal{F}, \cdots, \mathbf{x}_F^\mathcal{F} \right\}.
\end{equation}
In this way, for each node $v\in\mathcal{V}$, we have a feature graph $\mathcal{G}^\mathcal{F}$.
We next establish their relationship by defining the feature adjacency matrices via feature cross-correlation.

\subsection{Feature Adjacency Matrix}
For each item in continuum \eqref{eq:continuum}, the edges between $\mathbf{x}$ and its neighbors $\mathcal{N}(\mathbf{x})$ imply the existence of correlations between their features.
We model the feature adjacency as the correlation over the $k$-hop neighborhood $\mathcal{N}_k(\mathbf{x})$ and for each of the $c$ channels independently, where $c = 1, \dots, C$:
\begin{equation}\label{eq:feature-adj}
    %\begin{split}
\mathbf{A}^{\mathcal{F}}_{k,c} (\mathbf{x}) \triangleq \operatorname{sgnroot} \left( \mathbb{E}_{\mathbf{y} \sim \mathcal{N}_k(\mathbf{x})} \left[w_{x,y}\mathbf{x}_{[:,c]} \mathbf{y}_{[:,c]}^{\rm {T}} \right] \right),
%\end{split}
\end{equation}
where $w_{x,y}\in\mathbb{R}$ is the associated edge weight and $\operatorname{sgnroot} (x) = \operatorname{sign}(x) \sqrt{\left|x\right|}$ retains the magnitude of node features and the sign of their inner products.
% Note that $\mathbf{x}_{[:,c]}$ in \eqref{eq:feature-adj} is the $c$-th column (channel) of $\mathbf{x}$, while the feature node $\mathbf{x}_{[i,:]}$ is the $i$-th row of $\mathbf{x}$.
Note that $\mathbf{A}^{\mathcal{F}}_{k,c}$ encodes the connectivity information of the regular graph into the feature adjacency matrices.
% by only encoding information from connected nodes.
For each sample $\mathbf{x}$, this produces $C$ matrices of size ${F \times F}$, where $F\ll N$ due to the lifelong learning settings.
% In rare cases where the edge weight is an $E$-dimensional vector, we use each component as $w_{x, y}$ to obtain $E$ feature adjacency matrices with \eqref{eq:feature-adj}.
% In most of our experiments, however, $E = C = 1$.
For undirected graphs, we change  $\mathbf{x}_{[:,c]} \mathbf{y}_{[:,c]}^{\rm {T}}$ to $(\mathbf{x}_{[:,c]} \mathbf{y}_{[:,c]}^{\rm {T}} + \mathbf{y}_{[:,c]} \mathbf{x}_{[:,c]}^{\rm {T}})$ for symmetry.

In practice, the expectation in \eqref{eq:feature-adj} is approximated by averaging over the observed neighborhood:
\begin{equation}\label{eq:feature-adj-app}
\mathbb{E} \left[w_{x,y}\mathbf{x}_{[:,c]} \mathbf{y}_{[:,c]}^{\rm {T}} \right] \approx \frac{\sum _{ \mathbf{y} \in \mathcal{N}_k(\mathbf{x})} w_{x,y}\mathbf{x}_{[:,c]} \mathbf{y}_{[:,c]}^{\rm {T}}}{ \left|  \mathcal{N}_k(\mathbf{x}) \right|  }.
\end{equation}
In this way, the feature adjacency matrix is dynamically and independently constructed from neighborhood samples via \eqref{eq:feature-adj-app} so that \eqref{eq:continuum} is converted to a continuum of graphs:
\begin{equation}\label{eq:continuum-adjacency}
(\mathcal{G}^\mathcal{F}_1, t_1, \mathbf{z}_1), \dots,
(\mathcal{G}^\mathcal{F}_i, t_i, \mathbf{z}_i),\dots,
(\mathcal{G}^\mathcal{F}_N, t_N, \mathbf{z}_N)
\end{equation}
where $\mathcal{G}^\mathcal{F}_i = \left( \mathcal{V}^\mathcal{F}_i, \mathbf{A}^{\mathcal{F}}(\mathbf{x}_i) \right)$. 
%This means that our objective of learning a node predictor becomes learning a graph predictor $f^\mathcal{F}:\ \mathcal{G^F} \times \mathcal{T}\mapsto\mathcal{Z}$ to associate a test sample $(\mathcal{G}^\mathcal{F} ,t)$ to target vector $\mathbf{z}$ such that $(\mathcal{G}^\mathcal{F}, \mathbf{z}) \sim P_t^{\mathcal{F}}$.
This means that our objective of learning a node predictor becomes learning a graph predictor $f^\mathcal{F}$ that predicts target $\mathbf{z}$ for test sample $(\mathcal{G}^\mathcal{F} ,t)$ so that $(\mathcal{G}^\mathcal{F}, t, \mathbf{z}) \sim P_t^{\mathcal{F}}$.
Note that the feature graphs in the new continuum \eqref{eq:continuum-adjacency} are still non-\iid.
% , and each feature graph is associated with its own feature adjacency matrix.

In this way, a growing adjacency matrix is converted to multiple small adjacency matrices. 
Hence, a lifelong graph learning problem becomes a regular lifelong learning problem similar to \cite{lopez2017gradient} and the problem of increasing nodes can be solved by applying lifelong learning to the graph continuum \eqref{eq:continuum-adjacency}.
To be more clear, we list their detailed relationship in \tref{tab:feature-graph}, where the arrow $\mapsto$ refers to the conversion from a regular graph to multiple feature graphs.

\begin{table}[t]
  \centering
  \caption{The relationship of a graph and feature graph.}
  \vspace{-15pt}
  \label{tab:feature-graph}
  \begin{center}
  \resizebox{1\columnwidth}{!}{
  \begin{tabular}{cccrcl}
      \toprule
      Graph & & Feature Graph & \multicolumn{3}{c}{Relationship} \\
      \midrule
      Node  &$\mapsto$  & Graph & Node classification & $\mapsto$& Graph classification     \\
      Feature &$\mapsto$ & Node & Fixed Feature & $\mapsto$&  Fixed Nodes      \\
      Edge   & $\mapsto$ & Edges &  Growing Adjacency &  $\mapsto$ &  Multiple Adjacency \\
      Graph & $\mapsto$  & Samples & Dynamic graph &$\mapsto$& Multiple samples\\
      \bottomrule
  \end{tabular}}  
  \end{center}
  \vspace{-15pt}
\end{table}

\subsection{Feature Graph Layers}

Since feature graph is a new topology given by feature adjacency matrix, we are able to define many different types of layers.
In this section we present three types of layers.
%  yet effective layers, which are inspired by GCN and GAT.

\subsubsection{Feature Broadcast Layer}\label{sec:featbrd-layer}

Inspired by GCN, the $l$-th broadcast layer is defined as
\begin{equation}\label{eq:feature-broadcast}
\mathbf{x}_{(l+1)}^{\mathcal{F}} = \sigma \left(  \hat{\mathbf{A}}^{\mathcal{F}}_{k} \cdot \mathbf{x}^{\mathcal{F}}_{(l)}  \cdot  \mathbf{W}^{\mathcal{F}}  \right),
\end{equation}
where $\sigma(\cdot)$ is a non-linear activation function,  $\hat{\mathbf{A}}^{\mathcal{F}}_{k}$ is the associated normalized feature adjacency matrix, and $\mathbf{W} \in \mathbb{R}^{C_{(l)}\times C_{(l+1)}}$ is a learnable parameter.
For simplicity, the channel $c$ is left out in \eqref{eq:feature-broadcast} and the layer channel is broadcasted independently.
% The node feature $\mathbf{x}^{\mathcal{F}}$ in \eqref{eq:feature-convolution} can be a row of $\mathbf{X}$ in \eqref{eq:gcn} if $C=1$.
% In this sense, the increasing nodes in a regular graph become multiple feature graphs.
It is worth noting that, although the definition of \eqref{eq:feature-broadcast} appears similar to \eqref{eq:gcn}, they have different dimension and represent different meanings.

\subsubsection{Feature Transform Layer}\label{sec:feature-transform-layer}

Similar to graph convolutional layer \eqref{eq:gcn}, the feature broadcast layer doesn't change the number of feature nodes.
However, this is not always necessary, as the objective has been turned into graph classification.
%Therefore, we next define a feature transform layer to change the number of feature nodes and present it in \appref{sec:feature-transform-layer} due to limited pages.
%Similar to graph convolutional layer \eqref{eq:gcn}, the feature broadcast layer defined in \sref{sec:featbrd-layer} is able to retain the number of feature nodes in hidden layers.
%This is a requirement for node classification but may not be necessary for graph classification.
Therefore, we define a feature transform layer which can change the number of feature nodes and will be helpful for further reducing the number of learnable parameters.
Different from the feature broadcast layer, we need to re-calculate the feature adjacency matrices from transformed neighbors.
% due to the changes of the number feature nodes.
Therefore, given the feature graph $ \mathcal{G^F}$, the $l$-th feature transform layer can be defined as
\begin{subequations}\label{eq:feature-transform}
    \begin{align}
        &\mathbf{A}^{\mathcal{F}}_{(l)} (\mathbf{x}) \triangleq \mathbf{A}^{\mathcal{F}}_{k} \left(\mathbf{x}_{(l)}, \mathbf{y}_{(l)}\right), \quad \forall \mathbf{y} \in \mathcal{N}_k(\mathbf{x}), \label{eq:transform} \\
        % \mathcal{T}_{(l)}(\mathbf{x}^{\mathcal{F}}) &= \mathbf{W}^{\mathcal{F}} \cdot \hat{\mathbf{A}}^{\mathcal{F}}_{(l)} (\mathbf{x}) \cdot \mathbf{x}^{\mathcal{F}}, \\
        &\mathbf{x}_{(l+1)}^{\mathcal{F}} = \sigma \left(\mathbf{W}^{\mathcal{F}} \cdot \hat{\mathbf{A}}^{\mathcal{F}}_{(l)} (\mathbf{x}) \cdot \mathbf{x}^{\mathcal{F}}_{(l)}  \right),  \\ &\mathbf{y}_{(l+1)}^{\mathcal{F}}  = \sigma \left(\mathbf{W}^{\mathcal{F}} \cdot \hat{\mathbf{A}}^{\mathcal{F}}_{(l)} (\mathbf{x}) \cdot \mathbf{y}^{\mathcal{F}}_{(l)}\right), \label{eq:trans-y}
    \end{align}
\end{subequations}
where $\mathbf{W}^{\mathcal{F}} \in \mathbb{R}^{F_{(l+1)}\times F_{(l)}}$ is a learnable parameter, $\sigma(\cdot)$ is a non-linear activation function, and $\hat{\mathbf{A}}^{\mathcal{F}}_{(l)}(\mathbf{x}) \in \mathbb{R}^{F_{(l)}\times F_{(l)}}$ is the normalized feature adjacency $\mathbf{A}^{\mathcal{F}}_{(l)} (\mathbf{x})$.
The node features sometimes can be smoothed due to graph propagation, hence we can replace $\hat{\mathbf{A}}^{\mathcal{F}}_{(l)} (\mathbf{x}) \cdot \mathbf{y}^{\mathcal{F}}_{(l)}$ in \eqref{eq:trans-y} to  $[\hat{\mathbf{A}}^{\mathcal{F}}_{(l)} (\mathbf{x}) \cdot \mathbf{y}^{\mathcal{F}}_{(l)}, \mathbf{y}^{\mathcal{F}}_{(l)}]$
 by concatenating input features to prevent over-smoothing.
% \begin{equation}\label{eq:feature-transcat}
%   \mathbf{x}_{(l+1)}^{\mathcal{F}} = \sigma \left(\mathbf{W}^{\mathcal{F}} \cdot [\hat{\mathbf{A}}^{\mathcal{F}}_{(l)} (\mathbf{x}) \cdot \mathbf{x}^{\mathcal{F}}_{(l)}, \mathbf{x}^{\mathcal{F}}_{(l)}]  \right), \quad \mathbf{y}_{(l+1)}^{\mathcal{F}}  = \sigma \left(\mathbf{W}^{\mathcal{F}} \cdot [\hat{\mathbf{A}}^{\mathcal{F}}_{(l)} (\mathbf{x}) \cdot \mathbf{y}^{\mathcal{F}}_{(l)}, \mathbf{y}^{\mathcal{F}}_{(l)}]\right),
% \end{equation}
% where $[\hat{\mathbf{A}}^{\mathcal{F}}_{(l)} (\mathbf{x}) \cdot \mathbf{x}^{\mathcal{F}}_{(l)}, \mathbf{x}^{\mathcal{F}}_{(l)}]$ denotes the concatenate operation. 
% The feature concatenation version will be demonstrated using the ogbn-arXiv dataset \cite{mikolov2013distributed}.

\subsubsection{Feature Attention Layer}

In the cases that a graph is fully connected or the edges have no weights, $w_{x,y}$ in \eqref{eq:feature-adj} will be not well defined.
Prior methods often rely on an attention mechanism, \eg, GAT \cite{velickovic2018graph}, to focus on important neighbors. Inspired by this, we define the edge weights as an attention in \eqref{eq:attention-weight}.
\begin{subequations}\label{eq:attention-weight}
  \begin{align}
    &w_{x,y} = \frac{\exp(e_{x, y})}{\sum_{z\in\mathcal{N}(z)} \exp(e_{x, z})}, \\
    &e_{x,y} = \operatorname{LeakyReLU}(\mathbf{a}_{x}^T \mathbf{x} + \mathbf{a}_{y}^T \mathbf{y} + b),
  \end{align}
\end{subequations}
where $\mathbf{a}_{x}, \mathbf{a}_{y} \in \mathbb{R}^F, b \in \mathbb{R}$ are learnable attention parameters.
We can also construct other types of layers based on the topology $ \mathcal{G^F}$, \eg, extending convolution to kervolution \cite{wang2019kervolutional} to improve the model expressivity or combining the pagerank algorithm \cite{klicpera2019predict} to further reduce feature over-smoothness.
In this paper, we will mainly demonstrate the effectiveness of the three types of layers introduced above.
% Specifically, we will perform extensive experiments using the broadcast layers on the widely-used citation graphs (\sref{sec:overall}), the transform layers on the social networks, and the attention layers for distributed human action recognition (\sref{sec:action}) and image feature matching (\sref{app:feature-matchinng}).
% However, they are out of the scope of this paper, we will demonstrate the effectiveness of the proposed feature graph using the simple feature broadcast layer and feature transform layer in the experiments.

\subsection{Analysis and Computational Complexity}

We next provide an intuitive explanation for feature graphs.
A feature graph doesn't retain the physical meaning of its regular graph. Take a social network as an example, in a regular graph, each user is a node and user connections are edges. In feature graphs, each user \textit{is} a graph, while the user features, including the users' age, gender, \etc, are nodes.
Therefore, user behavior prediction becomes graph classification based on node information and new users simply become multiple training samples.
Since the number of user features is more stable than the number of users, the size of a feature graph is more stable than its regular graph.
This simplifies the learning on growing graphs dramatically by reducing the problem to the sample-incremental learning.

On the other hand, a regular graph usually assumes that some useful information of a node is often encoded into its neighbors, thus graph propagation is able to improve the model performance.
A feature graph has the same assumption.
However, feature graph doesn't directly propagate the neighbor features, but encode the neighbors into the feature adjacency matrices \eqref{eq:feature-adj}.
Regardless of the nonlinear function, existing methods propagate neighbor features as $\hat{\mathbf{A}}\mathbf{X}$, where $\mathbf{X}=[\mathbf{x}_1,\cdots,\mathbf{x}_n]$, meaning they can only propagate information element-wisely, as features in the next layer are weighted average of the current features \cite{kipf2017semi,velickovic2018graph,klicpera2019predict,zeng2020graphsaint}.

In contrast, each feature graph propagate features via $\hat{\mathbf{A}}^{\mathcal{F}} \mathbf{x}^\mathcal{F}$, which explicitly model ``interaction'' between features.
In this sense, feature graph doesn't lose information of edges but encode them into the feature adjacency.
This explains its superiority over existing graph models in regular graph learning as shown in \sref{sec:experiments}.
% Therefore, a feature graph explicitly models feature ``interaction'', which is crucial for many graph-structured tasks.
Take a citation graph as example, a keyword in an article may influence another keyword in the article citing the former.
However, element-wise graph propagation like \eqref{eq:gcn} cannot explicitly model this relationship \cite{kipf2017semi,velickovic2018graph,klicpera2019predict,zeng2020graphsaint}.
Although feature graph is also suitable for regular graph learning, we mainly focus our discussion on lifelong graph learning.

Feature graphs have a low computational complexity.
Concretely, the complexity for calculating the feature adjacency is $\mathcal{O}(nF_{(l)}^2)$, where $n$ is the expected number of node neighbors in the continuum.
Therefore, regardless of the number of layers, which is a constant for specific models, the complexity of graph propagation for a feature broadcast layer and a feature transform layer is of $\mathcal{O}(F_{(l)}^2C_{(l)}C_{(l+1)})$ and $\mathcal{O}(F_{(l)}F_{(l+1)}C_{(l)}^2)$, respectively.
If we take $E_{(l)}=F_{(l)}C_{(l)}$ as the number of elements of the features, the complexity for calculating each sample is roughly of $\mathcal{O}(E^2+nF^2)$, where we leave out the layer index for simplicity.
Note that they do not depend on the task number, thus feature graphs have constant complexity with the increased learning tasks.

FGN is applicable even when the number of features $F$ is very large.
In this case, we often use a feature transform layer \eqref{eq:feature-transform} as a feature extractor to project the raw features onto a lower dimension.
In the experiments we reduced the dimension of Cora \cite{sen2008collective} from $F_{(1)} = 1433$ to $F_{(2)} = 10$, which is also adopted by GCN, GAT, APPNP, SAGE, etc.

\section{Experiments}\label{sec:experiments}

\myparagraph{Implementation Details}
We perform comprehensive tests on popular graph datasets including citation graph Cora, Citeseer, Pubmed \cite{sen2008collective}, and ogbn-arXiv \cite{mikolov2013distributed}. % and social graph Flickr \cite{mcauley2012image}. 
For each dataset, we construct two different continua: data-incremental and class-incremental.
In data-incremental tasks, all samples are streamed randomly, while in class-incremental tasks, all samples from one class are streamed before switching to the next class.
In both tasks, each node is only presented to the model once, \ie, a sample cannot be used to update the model parameters again once dropped.
For this experiment, we implement a two-layer feature graph network in PyTorch \cite{paszke2017automatic} and adopt the SGD \cite{kiefer1952stochastic} optimizer.
% We adopt the SGD \cite{kiefer1952stochastic} optimizer for Cora, Citeseer, and Pubmed and the Adam optimizer \cite{kingma2014adam} and obgn-arXiv.
See further details in \appref{sec:dataset}.
We choose the most popular graph models including GCN \cite{kipf2017semi}, GAT \cite{velickovic2018graph}, SAGE \cite{hamilton2017inductive}, and APPNP \cite{klicpera2019predict} as our baseline.
Comparison with other methods such as SAINT \cite{galke2020incremental} are omitted as they require a pre-processing of the entire dataset, which is incompatible with the lifelong learning setting.
The overall test accuracy after learning all tasks is adopted as the evaluation metric \cite{aljundi2019gradient}.

\begin{table*}[t]
  \caption{Overall performance comparison on all the datasets in \sref{sec:experiments}.$^\dagger$}
  \vspace{-10pt}
  \label{tab:overall}
  \begin{center}
  \small
  % \resizebox{1\columnwidth}{!}{
  \begin{threeparttable}
    \begin{tabular}{C{0.1\linewidth}C{0.02\linewidth}C{0.17\linewidth}C{0.17\linewidth}C{0.17\linewidth}C{0.17\linewidth}}
        \toprule
        Method & Task$^\ddagger$ & Cora & Citeseer & Pubmed & ogbn-arXiv \\
        \midrule
        MLP &\multirow{6}{*}{R}& 0.673 & 0.702 & 0.832 & 0.462  \\
        GCN && 0.850  & 0.768 & 0.868 & 0.557  \\
        SAGE && 0.850 & 0.768 & 0.858  & 0.615  \\
        GAT && \underline{0.877} & 0.776  & \underline{0.872}  & \underline{0.619} \\
        APPNP && 0.883  & \underline{0.777} & 0.870  & 0.615  \\
        \textbf{FGN} &&\textbf{0.887} & \textbf{0.785} &\textbf{0.884} & \textbf{0.631} \\
        \midrule
        MLP & \multirow{6}{*}{D}& 0.652 $\pm$ 0.006  &  0.707 $\pm$ 0.008  & 0.812 $\pm$ 0.006 & 0.349 $\pm$ 0.006 \\
        GCN & & 0.827 $\pm$ 0.003  & \underline{0.760 $\pm$ 0.004}   &  0.845 $\pm$ 0.002 & 0.397  $\pm$  0.012\\
        SAGE & & 0.778 $\pm$ 0.016 & 0.712 $\pm$ 0.015 & 0.816 $\pm$ 0.015 & \underline{0.531 $\pm$ 0.006}  \\
        GAT &  & 0.857  $\pm$ 0.007 & 0.735 $\pm$ 0.009  &  0.847 $\pm$ 0.002 & 0.512 $\pm$ 0.003\\
        APPNP & & \underline{0.861 $\pm$ 0.001} & 0.725 $\pm$ 0.016  & \underline{0.851 $\pm$ 0.003}  & 0.511 $\pm$ 0.002 \\
        \textbf{FGN} & & \textbf{0.870 $\pm$ 0.011} & \textbf{0.762 $\pm$ 0.003} & \textbf{0.872 $\pm$ 0.009}   & \textbf{0.555 $\pm$ 0.010} \\
        \midrule
        MLP & \multirow{6}{*}{C}& 0.558 $\pm$ 0.010 & 0.597 $\pm$ 0.012 & 0.816 $\pm$ 0.013 & 0.237 $\pm$ 0.011\\
        GCN & & 0.796 $\pm$ 0.022  & 0.695 $\pm$ 0.003 & \underline{0.851 $\pm$ 0.012} & 0.328 $\pm$ 0.011 \\
        SAGE & & 0.826 $\pm$ 0.014 & \underline{0.716 $\pm$ 0.001} & 0.841 $\pm$ 0.011  & \underline{0.455 $\pm$ 0.003} \\
        GAT &  & \underline{0.833 $\pm$ 0.032} & 0.698 $\pm$ 0.009 & 0.833 $\pm$ 0.006 & 0.409 $\pm$ 0.006\\
        APPNP & & 0.818 $\pm$ 0.017 &  0.689 $\pm$ 0.003 & 0.844 $\pm$ 0.013 & 0.423 $\pm$ 0.007 \\
        % \textbf{FGN-P} & & 0.800 $\pm$ 0.033 &  0.696 $\pm$ 0.023 & 0.820 $\pm$ 0.018 & 0.483 $\pm$ 0.006 & 0.396 $\pm$ 0.004 \\
        \textbf{FGN} & & \textbf{0.853 $\pm$ 0.014 }& \textbf{0.733 $\pm$ 0.009}  & \textbf{0.857 $\pm$ 0.003} & \textbf{0.494 $\pm$ 0.003}\\
        \bottomrule
    \end{tabular}
    \begin{tablenotes}[normal,flushleft]
      \item $^\dagger$We denote the best performance in \textbf{bold} and second best with \underline{underline}. Due to the settings of lifelong learning, dataset split is different from the original one, as we only test the final optimized model and a validation set is unnecessary (\appref{sec:dataset}).
      \item $^\ddagger$Task of ``R'', ``D'', ``C'' denote regular, data-incremental, and class-incremental learning, respectively. 
    \end{tablenotes}
  \end{threeparttable}
  % } 
  \end{center}
  \vspace{-15pt}
\end{table*}

\myparagraph{Sequence Invariant Sampling}\label{sec:lifelong-learning}
We adopt the lifelong learning method described in \cite{aljundi2019gradient} with a minor improvement for all the baseline models. As reported in the Section 4.2 of \cite{aljundi2019gradient}, it tends to neglect earlier items in the continuum, which discourages the memorization of earlier knowledge. We find that this is because a uniform sampling is adopted for all items.
To compensate for such effect, we set a customized selection probability for different items (See \appref{app:sampling}).

\myparagraph{Performance}
Lifelong learning has a performance upper bound given by the regular learning settings, thus the performance in regular learning is also an important indicator for the effectiveness of feature graph.
We present the overall performance of regular learning in \tref{tab:overall} and denote this task as ``R'', where the same dataset settings are adopted for different models.
To demonstrate the necessity of graph models, we also report the performance of MLP, which neglects the graph edges.
It can be seen that feature graph achieves a comparable overall accuracy in all the datasets.
This means that feature graph may also be useful for regular graph learning.
We next show that feature graph in lifelong learning is approaching this upper bound and dramatically alleviate the issue of ``catastrophic forgetting''.

% \begin{table}[ht]
%   \centering
%   \begin{threeparttable}
%     \caption{Upper bound performance of non-lifelong learning.}
%     \label{tab:nonlifelong}
%     \begin{tabular}{cccccc}
%         \toprule
%         Method & Cora & Citeseer & Pubmed & ogbn-arXiv  & Flickr \\
%         \midrule
%         SAGE & 0.850 & 0.768 & 0.858  & 0.615  & \textbf{0.493} \\
%         FGN & \textbf{0.857} &  \textbf{0.788} & \textbf{0.872} & \textbf{0.631} & 0.487\\
%         \bottomrule
%     \end{tabular}
%   \end{threeparttable}
% \end{table}

\begin{table}
\small
\caption{Average class forgetting rate in task ``C'' (\%).}
\vspace{-5pt}
\label{tab:forget-rate}
\centering
\label{tab:ablation-study}
    \begin{tabular}{ccccc}
        \toprule
        Model & Cora& Citeseer & Pubmed & ogbn-arXiv\\
        \midrule
        APPNP & 6.38 & 8.03 & 3.61 & 16.3\\
        GAT &5.50 & 7.05 & 3.50 & 17.7\\
        \textbf{FGN} &\textbf{1.35} & \textbf{0.41} & \textbf{3.49} & \textbf{14.8}\\
        \bottomrule
    \end{tabular}
\vspace{-5pt}
\end{table}

The overall averaged performance on the data-incremental tasks is reported as task of ``D'' in \tref{tab:overall}, where all performance is averaged over three runs.
% It can be seen that the FGN achieves an a strong relative improvement of $7.1\%$, $2.7\%$, $6.9\%$, $11.1\%$, and $4.5\%$ for the five datasets, respectively.
We use a memory size of 500 for the datasets Cora, Citeseer, and Pubmed, and a memory size of 512 for ogbn-arXiv.
It is also worth noting that we have a low standard deviation, which means that its performance is stable and not sensitive to the model initialization.
% We notice that FGN even achieves the upper bound performance on the Pubmed dataset, which further demonstrates its effectiveness in lifelong graph learning.
Samples in the class-incremental tasks are non-\iid~and we have no prior knowledge on the task boundary, thus it is generally \textit{more difficult} than data-incremental tasks.
The overall test accuracy on class-incremental tasks are listed as ``C'' in \tref{tab:overall}.
Similar to the data-incremental tasks, we use a memory size of 500 for Cora, Pubmed, and Citeseer, and a memory size of 4096 for ogbn-arXix, which is grouped into 8 tasks, each of which contains 5 classes.
Despite the difficulty, FGN still obtains higher performance on ogbn-arXix in task ``C'', compared to other state-of-the-art methods, which verifies its superiority in lifelong graph learning.
We further report the forgetting rate \cite{chaudhry2018riemannian} in the task ``C'' for the best models in \tref{tab:forget-rate}, which is the average precision drop compared to the regular learning. It can be seen that FGN achieves the least forgetting rate, which demonstrates its effectiveness in lifelong graph learning.
% It can be seen that the proposed feature graph achieves an average of $16.6\%$, $8.8\%$, $0.5\%$, and $8.2\%$ improvement compared to SAGE on Cora, Citeseer, Pubmed, Flickr, respectively.
% They are very large margin and we expect that other customized layers and extensive hyper-parameter searches can further improve the performance.
% Note that we cannot obtain meaningful performance for SAGE on the class-incremental obgn-arXiv task, while FGN still has an average overall accuracy of $48.3\%$, which further verifies its effectiveness.

\myparagraph{Memory Efficiency}
We report the number of learnable parameters in \tref{tab:parameter}.
% Due to its simplicity, FGN requires the least number of learnable parameters.
%As reported in \cite{fey2019fast}, GCN is the fastest graph learning method, however, 
Due to its simplicity, FGN only requires $42\%$ less parameters compared to GCN and SAGE, which demonstrates its memory efficiency.

\begin{table}[t]
  \caption{The number of model parameters used in \sref{sec:experiments}.}
  \vspace{-15pt}
  \label{tab:parameter}
  \begin{center}
  \resizebox{1\columnwidth}{!}{
  \begin{threeparttable}
  \setlength{\tabcolsep}{3pt}
    \begin{tabular}{ccccccc}
        \toprule
        Model & MLP & GCN & SAGE & GAT & APPNP & \textbf{FGN} \\
        \midrule
        \# Parameters & 37,888 & 37,888 & 37,932 & 38,700 & 38,184 & \textbf{21,872}  \\
        \bottomrule
    \end{tabular}
  \end{threeparttable}
  }
  \end{center}
  \vspace{-15pt}
\end{table}

\begin{figure*}[ht]
  \subfloat[Sensors.\label{fig:ward-subject}]{
      \includegraphics[width=0.11\linewidth]{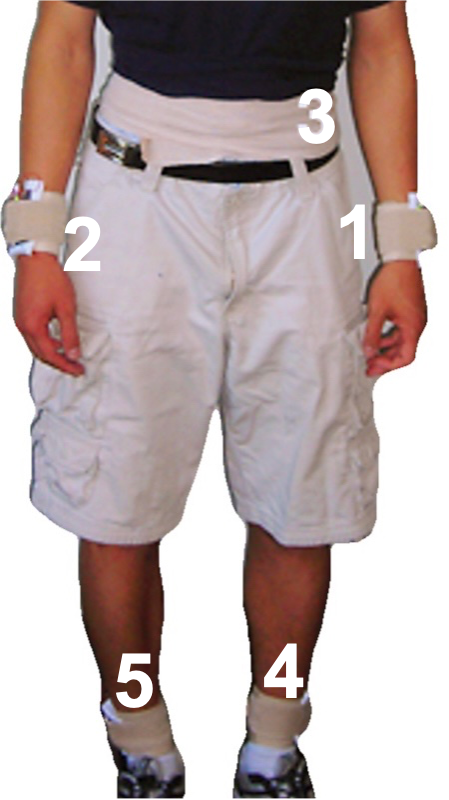}}
  \hfill
  \subfloat[The temporal growing graph.\label{fig:ward-streaming}]{
      \includegraphics[width=0.275\linewidth]{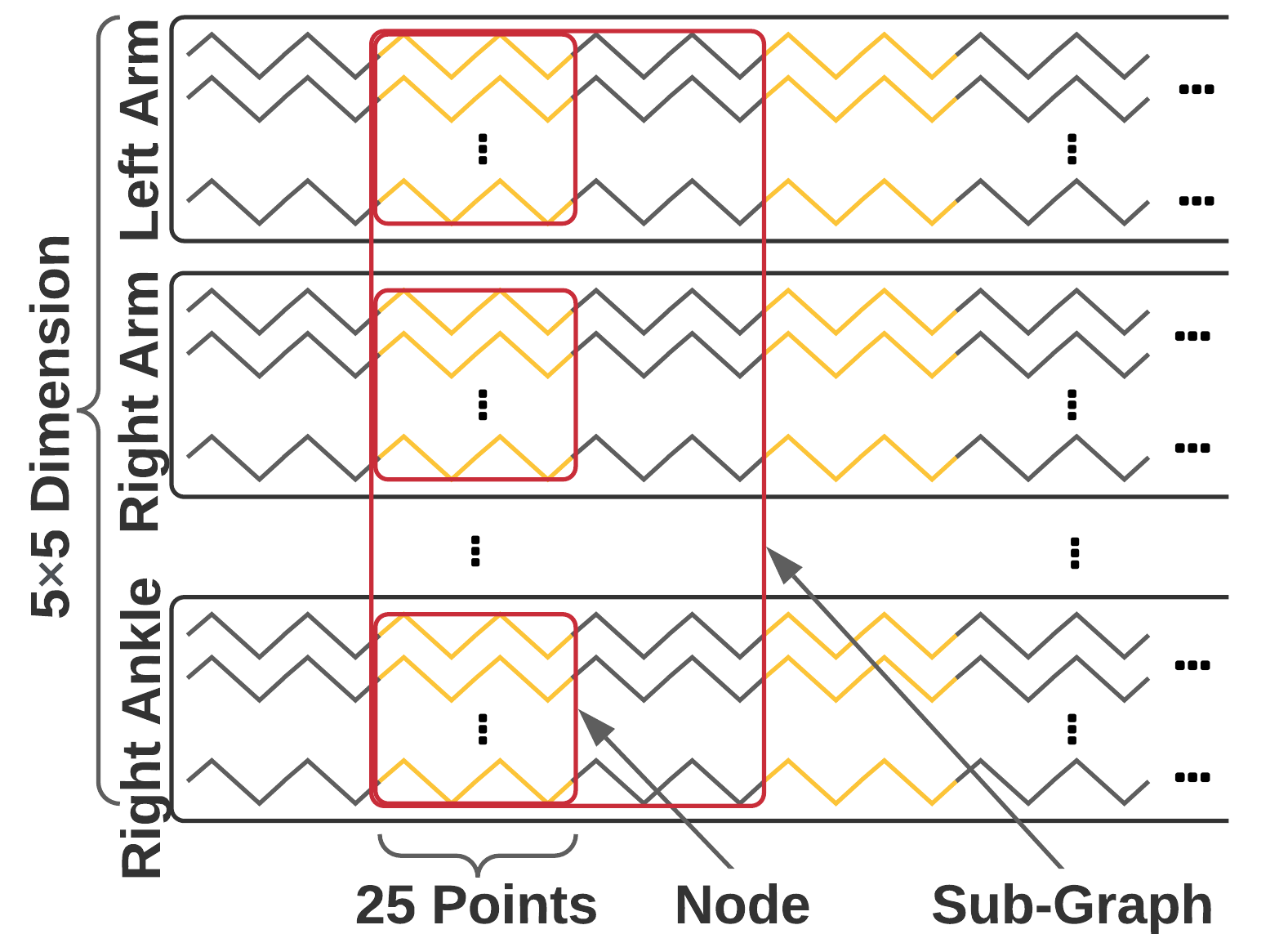}}
  \hfill
  \subfloat[Precision during lifelong learning.\label{fig:sequential-learning}]{ 
    \includegraphics[width=0.270\linewidth]{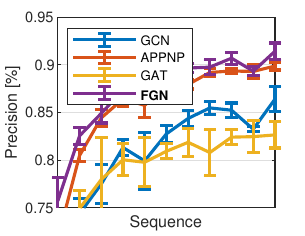}}
    \hfill
  \subfloat[Per-class precision.\label{fig:per-class}]{ 
    \includegraphics[width=0.270\linewidth]{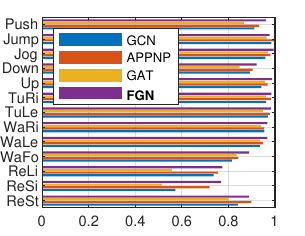}}
  \caption{The wearable sensor networks, the streaming graph, precision during learning, and the final performance comparison.}
\end{figure*}

\begin{table*}[ht]
  \caption{Backward maximum forgetting rate of all models on all actions (\%).}
  \vspace{-15pt}
  \label{tab:ward-forgetting}
  \begin{center}
  \small
%   \resizebox{1\columnwidth}{!}{
  \begin{threeparttable}
    \begin{tabular}{ccccccccccccccc}
        \toprule
        Model & ReSt & ReSi & ReLi & WaFo & WaLe & WaRi & TuLe & TuRi & Up & Down & Jog & Jump & Push & \textbf{Overall}\\
        \midrule
        GCN & 5.34&	12.49&	0.99&	1.32&	2.88&	0.59&	0.89&	1.58&	1.30&	2.81&	1.24&	0.26&	5.31&	2.85 \\
        APPNP & 0.81&	2.57&	7.17&	1.37&	2.70&	0.80&	0.94&	0.44&	0.11&	3.97&	0.99&	0.00&	3.66&	1.96 \\
        GAT & 1.34&	6.12&	4.23&	0.00&	2.79&	0.76&	0.82&	0.83&	0.81&	2.27&	0.62&	0.87&	4.79&	2.02 \\
        \textbf{FGN} & 0.19&	0.56&	3.32&	0.95&	2.67&	0.53&	1.15&	1.04&	0.00&	3.52&	0.46&	1.02&	2.28&	\textbf{1.36} \\
        \bottomrule
    \end{tabular}
  \end{threeparttable}
%   }
  \end{center}
  \vspace{-20pt}
\end{table*}

\section{Distributed Human Action Recognition} \label{sec:action}

\myparagraph{Implementation Details}
To demonstrate the flexibility of FGN, we apply it to an application, \ie, distributed human action recognition using wearable motion sensor networks in \fref{fig:ward-subject} (Data from \cite{yang2009distributed}).
Five sensors, each of which consists of a triaxial accelerometer and a
biaxial gyroscope, are located at the left and right forearms, waist, left and right ankles, respectively.
Each sensor produces 5 data streams and totally $5\times 5$ data streams is available.
Each stream is recorded at 30\hertz~from human subjects aged 19 to 75 during 13 daily action categories, including rest at standing (ReSt), rest at sitting (ReSi), rest at lying (ReLi), walk forward (WaFo), walk forward left-circle (WaLe), walk forward right-circle (WaRi), turn left (TuLe), turn right (TuRi), go upstairs (Up), go downstairs (Down), jog (Jog), jump (Jump), and push wheelchair (Push).

% \begin{table}[t]
%   \caption{Average forgetting rate in different classes ($\%$).}
%   \label{tab:forget rate}
%   \begin{center}
%   \small
%   \begin{threeparttable}
%     \begin{tabular}{ccccc}
%         \toprule
%         Model & Cora& Citeseer & Pubmed & ogbn-arXiv\\
%         \midrule
%         % GCN & 3.82 &	3.78 &	-3.65 &	6.95 &	3.79 &	1.40 &	0.57	& 1.03 & 1.87 & 1.65 & 2.45 & 0.64 & 4.71 \\
%         APPNP &&&& \\
%         GAT &5.50\% &&&\\
%         \textbf{FGN} &\textbf{1.35\%} &&&\\
%         \bottomrule
%     \end{tabular}
%   \end{threeparttable}
%   \end{center}
% \end{table}

\begin{table}[t]
  \caption{Precision comparison on the action recognition.}
  \vspace{-15pt}
  \label{tab:action}
  \begin{center}
  \small
  \resizebox{1\columnwidth}{!}{
  \begin{threeparttable}
    \setlength{\tabcolsep}{2.5pt}
    \begin{tabular}{ccccccc}
        \toprule
        & MLP & GCN & APPNP  & GAT &\textbf{FGN} \\
        \midrule
        Regular & 0.645$\pm$0.046 & 0.881$\pm$0.016 & 0.942$\pm$0.006  & 0.911$\pm$0.004 & \textbf{0.954$\pm$0.004} \\
        Lifelong & -$^\dagger$ & 0.864$\pm$0.013 & 0.898$\pm$0.004  & 0.826$\pm$0.014 & \textbf{0.914$\pm$0.009}  \\
        \bottomrule
    \end{tabular}
    \begin{tablenotes}[normal,flushleft]
      \item $^\dagger$The notation ``-'' means that we cannot obtain meaningful performance on this task.
    \end{tablenotes}
  \end{threeparttable} 
  }
  \end{center}
  \vspace{-20pt}
\end{table}

\begin{figure*}[ht]
  \centering
  \subfloat{\includegraphics[width=0.24\linewidth]{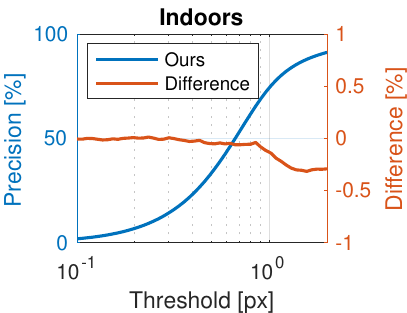}}
  \hfill
  \subfloat{\includegraphics[width=0.24\linewidth]{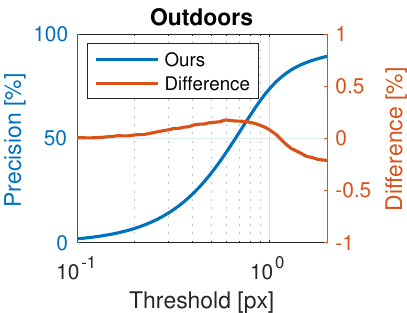}}
  \hfill
  \subfloat{\includegraphics[width=0.24\linewidth]{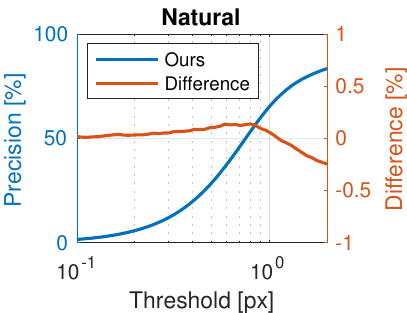}}
  \hfill
  \subfloat{\includegraphics[width=0.24\linewidth]{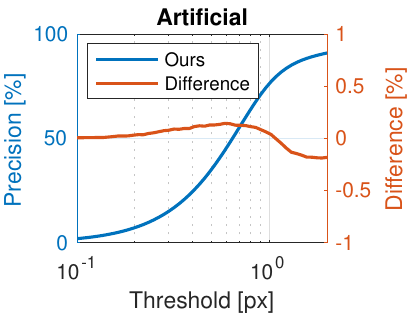}}
  \hfill
  \subfloat{\includegraphics[width=0.24\linewidth]{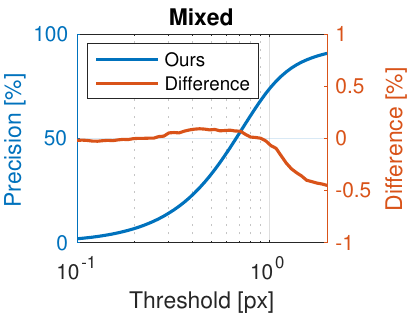}}
  \hfill
  \subfloat{\includegraphics[width=0.24\linewidth]{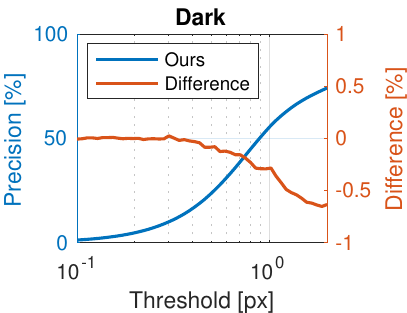}}
  \hfill
  \subfloat{\includegraphics[width=0.24\linewidth]{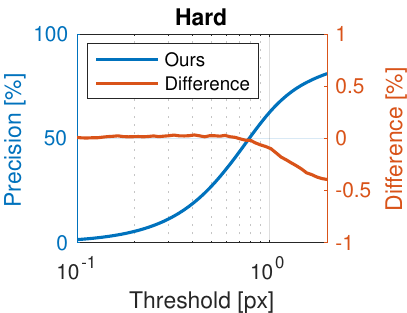}}
  \hfill
  \subfloat{\includegraphics[width=0.24\linewidth]{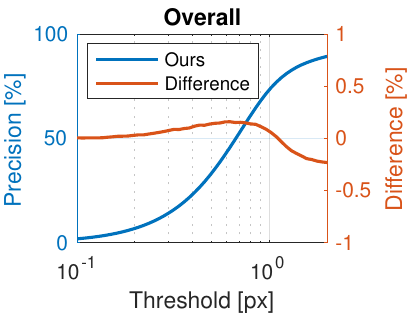}}
  \hfill
  \caption{The matching precision of our method (blue) and performance gap between our method and GAT (orange) at different level of tolerances. Our method outperforms GAT at sub-pixel precision in mildly difficult categories.}
  \label{fig:matching-err-distro}
  \vspace{-10pt}
\end{figure*}

We take every 25 sequential data points from a single sensor as a node and perform recognition for every 50 sequential points (1.67s) shown in \fref{fig:ward-streaming}, which is also adopted by \cite{wang2018kernel}.
This results in a temporally growing graph: Each node is associated with a multi-channel ($C=5$) 1-D signal $\mathbf{x}_t^i\in\mathbb{R}^{25\times 5}$, where $i=1,2\dots5$ is the sensor index and $t=1,2\dots T$ is the time index.
We assume all nodes at adjacent time index are connected, \ie, for all $t$, nodes ($\mathbf{x}_t^1, \dots, \mathbf{x}_t^5$, $\mathbf{x}_{t+1}^1, \dots, \mathbf{x}_{t+1}^5$) form a fully connected subgraph. %$\mathbf{x}_{t+1}^i$.
% Only nodes with the same time index ($\mathbf{x}_t^1, \dots, \mathbf{x}_t^5$) or same sensor index but adjacent time index ($\mathbf{x}_t^i, \mathbf{x}_{t+1}^i$) are joined by an edge.
Therefore, the problem of human action recognition becomes a problem of sub-graph (10 nodes) classification.
% and lifelong human action recognition becomes a lifelong graph learning problem. 

\myparagraph{Performance}
We list the overall performance of regular and lifelong learning in \tref{tab:action}, which is obtained from an average of three runs. It can be seen that FGN achieves the best overall performance in regular learning and a much higher performance in lifelong learning compared to the state-of-the-art methods. This means that FGN has a lower forgetting rate, demonstrating its effectiveness.

We present the overall test accuracy during the lifelong learning process in \fref{fig:sequential-learning}.
It can be seen that FGN has a much higher and stable performance than all the other methods.
We also show their final per-class precision in \fref{fig:per-class}, which indicates that FGN achieves a much higher performance in nearly all the categories.
Note that all models have a relatively low performance on ReSt, ReSi, and ReLi. This is because their signals are relatively flat and similar due to their static posture. This phenomenon is also observed by a template matching method KCC \cite{wang2018kernel}. Note that we don't directly compare against to KCC, since it mainly performs matching for single subject, while we aim for prediction across different subjects.

We argue that the key to the superior performance of FGN is that FGN can explicitly model the feature relationships. Take the walking action as an example, the movement of a left arm is always related to the movement of the right leg. Such information can be explicitly modeled by the cross-correlation in the feature adjacency matrix.
Moreover, it can also model the feature relationship at different time step, \eg, the movement of the left arm at time $t$ is also related to the movement of the right arm at time $t+1$. 
As aforementioned, FGN can explicitly learn such relationship with $k=2$ in \eqref{eq:feature-adj}, while the other methods are unable to directly do this.
Besides, FGN also achieves the least forgetting rate in \tref{tab:ward-forgetting} over all classes, which is the difference between the final and best performance during the entire learning process.

\begin{figure}[ht]
  \centering
  \includegraphics[width=1\linewidth]{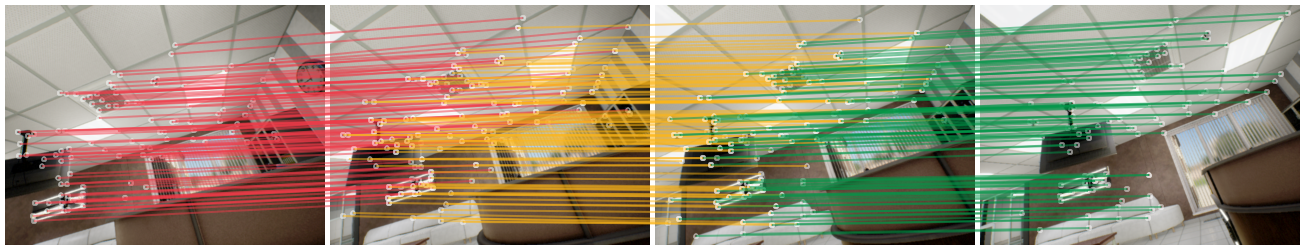}
  \includegraphics[width=1\linewidth]{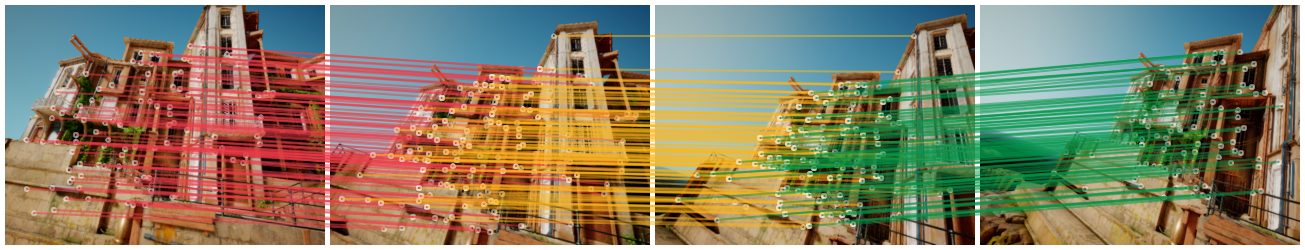}
  \includegraphics[width=1\linewidth]{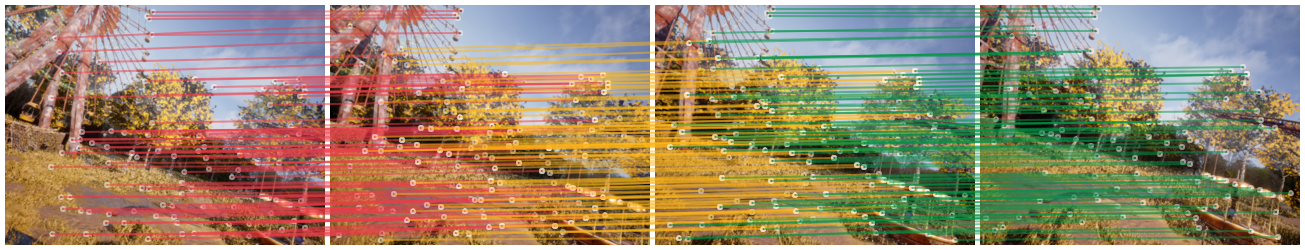}
  \caption{Matching examples on TartanAir dataset. Feature matching is a problem of edge prediction for temporal growing graph.}
  \label{fig:matching-demo}
  \vspace{-15pt}
\end{figure}

\section{Image Feature Matching}\label{app:feature-matchinng}

% We next extend FGN to a more challenging application, \ie, image feature matching. It is crucial for many 3-D computer vision tasks including simultaneous localization and mapping (SLAM).
% Although many hand-crafted feature descriptors such as SIFT \cite{lowe1999object} and ORB \cite{rublee2011orb} have been proposed decades ago, their performance is still unsatisfied for large view point changes.
% Due to the well generalization ability, deep learning-based feature detectors have received increasing attentions. For example, SuperPoint \cite{detone2018superpoint} introduced a self-supervised framework for extracting interest point detectors and descriptors. SuperGlue \cite{sarlin20superglue} introduced graph attention model into SuperPoint for feature matching.

\myparagraph{Implementation Details}
We next extend FGN to a more challenging application, \ie, image feature matching, which is crucial for many 3-D computer vision tasks such as simultaneous localization and mapping (SLAM).
As shown in \fref{fig:matching-demo}, the interest point and their descriptors form an infinite temporal growing graph, in which the feature points are nodes and their descriptors are the node features as defined in \sref{sec:formulation}. In this way, the problem of feature matching becomes edge prediction for a temporal growing graph.
We next show that the performance of SuperGlue \cite{sarlin20superglue}, which used a regular graph attention model for feature matching, can be simply improved by changing the graph-attention matcher to our proposed FGN.
For simplicity, we adopt a framework similar to that of SuperGlue but removed the cross-attention layers in the matching network.
In image feature matching, we normally have a temporal graph, where edge weights are undefined.
Therefore,  we construct FGN by concatenating two feature broadcast layers \eqref{eq:feature-broadcast} with the attention edge weights $w_{x,y}$ defined in \eqref{eq:attention-weight}.

\myparagraph{Performance}
We categorize the 80 test sequences from TartanAir dataset \cite{wang2020tartanair} into groups based on their characteristics.
\fref{fig:matching-demo} shows several examples of consecutive matching in Indoors, Outdoors\&Artificial, and Outdoors\&Mixed environments.
We report the mean matching error in \tref{tab:matching}, compared with the GAT-based matcher.
It can be observed that our method outperforms GAT by a noticeable margin on all categories, especially on the difficult ones (Outdoors, Natural, Hard).
Specifically, FGN achieves an overall error reduction of $24.2\%$ compared with GAT.

We further report the difference in error landscape of FGN and GAT in \fref{fig:matching-err-distro}.
Although there is no significant difference in matching precision at the single-pixel level, our method experiences a greater boost in precision in the sub-pixel region when error tolerance increases.
This effect is especially noticeable for the Outdoors\&Artificial categories.
This suggests that our method is more advantageous at identifying high quality matches in mildly difficult environments, which is beneficial for downstream tasks such as object tracking and motion estimation.

\begin{table}[t]
  \vspace{-10pt}
  \setlength{\tabcolsep}{1pt}
  \centering
  \small
  \begin{threeparttable}
      \caption{Mean Error (pixels) in image feature matching.$^\dagger$}
      \vskip 0.15in
      \label{tab:matching}
      \begin{tabular}{cccccccccc}
      \toprule
      Category    & I        & O       & N       & A    & M         & D          & H     & Overall       \\
      \midrule
      \# Sequence & 17            & 63            & 19            & 53            & 11            & 6             & 27            & 80            \\
      \midrule
      GAT  & 1.70          & 2.73          & 4.17          & 2.31          & 2.19          & 6.00          & 4.68          & 2.64          \\
      \midrule
      \textbf{FGN}   & \textbf{1.52} & \textbf{2.05} & \textbf{2.88} & \textbf{1.85} & \textbf{1.56} & \textbf{4.75} & \textbf{3.25} & \textbf{2.00} \\
      \midrule
      Reduction (\%) & -10.6 & -24.9&  -30.9&  -19.9&  -28.8&  -20.8&  -30.6 & -24.2   \\
      \bottomrule
  \end{tabular}
  \begin{tablenotes}[normal,flushleft]
      \item $^\dagger$Scene categories include: (I)ndoor and (O)utdoor, (N)aturalistic (woods and sea floor), (A)rtificial (streets and buildings), (M)ixed (containing both natural and artificial objects), (D)ark (poor lighting condition), (H)ard (violent motion and/or complex textures).
  \end{tablenotes}
  \end{threeparttable}
  \vspace{-15pt}
\end{table}

\section{Conclusion} \label{sec:conclusion}

In this paper, we focus on the problem of lifelong graph learning and propose the feature graph as a new topology, which solves the challenge of increasing nodes in a streaming graph.
It takes features of a regular graph as nodes and takes the nodes as independent graphs.
% To construct the feature adjacency matrices, we accumulate the cross-correlation matrices of the connected feature vectors to model feature interaction.
This successfully converts the original node classification to graph classification and turns the problem from lifelong graph learning into regular lifelong learning.
The comprehensive experiments show that feature graph achieves superior performance in both data-incremental and class-incremental tasks.
The applications on action recognition and feature matching demonstrates its superiority in lifelong graph learning.
To the best of our knowledge, feature graph is the first method to bridge graph learning to lifelong learning via a novel graph topology.
% We note that this topic is still far from well developed, and we expect further improved performance via customized layers and models.

% should not include for review

\iftoggle{cvprfinal}{%
\section*{Acknowledgment}
This work was sponsored by ONR grant \seqsplit{\#N0014-19-1-2266} and ARL DCIST CRA award \seqsplit{W911NF-17-2-0181}.
}{}

%%%%%%%%% REFERENCES
{\small
\bibliographystyle{ieee_fullname}
\bibliography{graph,continual-learning,papers}
}

% \balance
\clearpage
\appendix

\section{Dataset Details} \label{sec:dataset}

We perform comprehensive experiments on popular graph datasets including citation graph Cora, Citeseer, Pubmed, and ogbn-arXiv \cite{sen2008collective, mikolov2013distributed}.
Their statistics are listed in \tref{tab:dataset}.
Note that the dataset split is a little different from their original settings, as we can only test the final optimized model due to the requirement of lifelong learning, thus we don't need a validation set.
For Cora, Citeseer, and Pubmed, the model consists of two feature broadcast layers \eqref{eq:feature-broadcast} with $C_{(1)}=1$ and $C_{(2)}=2$ channels each.
For obgn-arXiv, we found that node features can be easily smoothed due to multiple feature propagation, hence we use the feature transform layers to concatenate its input features.
We take the one-hot vector as the target vector $\mathbf{z}_i$, adopt the cross-entropy loss, and use the $\mathrm{softsign}$ \cite{elliott1993better} function $\sigma(x) = \nicefrac{x}{\left(1+|x|\right)}$ as the non-linear activation.

\begin{table}[ht]
  \caption{The statistics of the datasets used for lifelong learning.}
  \vspace{-15pt}
  \label{tab:dataset}
  \begin{center}
  \begin{threeparttable}
  \setlength{\tabcolsep}{2.4pt}
  \begin{tabular}{cccccc}
      \toprule
      Dataset & Nodes & Edges & Classes & Features & Labels \\
      \midrule
      Cora  & 2,708 & 5,429 & 7 &  1,433 & 0.421 \\
      Citeseer  & 3,327 &  4,732 & 6 & 3,703 & 0.337 \\
      Pubmed  &19,717 & 44,338 & 3 & 500 & 0.054  \\
      obgn-arXiv & 169,343 & 1,166,243 & 40 & 128 & 0.5\\
    %   \midrule
    %   Flickr  & 89,250 & 899,756 & 7 &  500 & 0.5 \\
      \bottomrule
  \end{tabular}
%   \begin{tablenotes}[normal,flushleft]
%     \item $^\dagger$Flickr is a social network, while the others are citation networks.
%   \end{tablenotes}
  \end{threeparttable}
  \end{center}
\end{table}

\section{Proof of Sequence Invariant Sampling}\label{app:sampling}

Let $P$ be the probability that the observed items are selected at time $t$, thus the probability that one item is still kept in the memory after $k$ selection is $P^k$.
This explains that earlier items have lower probability to be kept in the memory and this phenomenon was reported in the Section 4.2 of \cite{aljundi2019gradient}. To compensate for such effect and
\begin{proposition}
To ensure that all items in the continuum have the same probability to be kept in the memory at any time $t$, we can set the probability that the $n$-th item is selected at time $t$ as
\begin{equation}%\label{eq:uniform-selection-copy}
P_n(t) =
\left\{\begin{matrix}
1                        &  t  \leqslant  M \\
\nicefrac{M}{n}    & t > M,~  t=n\\
\nicefrac{(n-1)}{n}  &  t>M,~ t>n
\end{matrix}\right.,
\end{equation}
where $M$ denotes the memory size.
\end{proposition}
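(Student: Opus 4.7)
The plan is to compute the marginal probability that the $n$-th observed item sits in the memory buffer at time $t$ under the sampling rule \eqref{eq:uniform-selection-copy}, and to show that this equals $\min(1,M/t)$ and hence depends only on $t$. First I would fix the interpretation of $P_n(s)$ as the step-wise probability that item $n$ is placed into the buffer at step $s=n$ and retained in the buffer at each subsequent step $s>n$. Because the successive retention or eviction events of a reservoir-style sampler are independent given buffer membership, the overall probability that item $n$ is in the buffer after step $t$ factors as
\begin{equation*}
\Pr\bigl(n\in\mathrm{mem}(t)\bigr)=P_n(n)\prod_{s=n+1}^{t}P_n(s).
\end{equation*}

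Next I would evaluate this product in the three regimes demarcated by \eqref{eq:uniform-selection-copy}. For $t\leqslant M$ every factor equals $1$, so the buffer membership probability is $1$, matching $\min(1,M/t)$. For $M<n\leqslant t$, the factor $M/n$ from step $s=n$ combines with the step-wise retention factor at each subsequent arrival, and the telescoping identity
\begin{equation*}
\frac{M}{n}\prod_{s=n+1}^{t}\frac{s-1}{s}=\frac{M}{n}\cdot\frac{n}{t}=\frac{M}{t}
\end{equation*}
collapses it to a uniform value. The intermediate case $n\leqslant M<t$ is analogous: the factors at steps $n,\ldots,M$ are each $1$, and telescoping from step $M+1$ again gives $M/t$.

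The principal obstacle is justifying that the per-step retention probability really takes the telescoping form $(s-1)/s$ rather than a quantity depending in a more complicated way on $n$. I would close this gap by induction on $t$: if at time $t-1$ every item ever seen is equally likely to be in the buffer, then by symmetry each of the $M$ incumbents is equally likely to be chosen for eviction, so the conditional probability that a fixed incumbent is evicted when item $t$ arrives is $(M/t)\cdot(1/M)=1/t$, yielding a retention factor $(t-1)/t$. Feeding this back into the telescoping product, together with the initial selection probability $M/n$ (or $1$ when $n\leqslant M$), closes the induction and establishes that all items share the common buffer-membership probability $M/t$ at time $t$, which is precisely the uniform selection property claimed.
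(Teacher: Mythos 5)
Your proof is correct and follows essentially the same route as the paper's: both compute the buffer-membership probability as the telescoping product $P_n(n)\prod_{s=n+1}^{t}P_n(s)=\frac{M}{n}\cdot\frac{n}{n+1}\cdots\frac{t-1}{t}=\frac{M}{t}$. You additionally cover the intermediate case $n\leqslant M<t$ and justify the per-step retention factor $(s-1)/s$ via the uniform-eviction symmetry argument, both of which the paper's proof leaves implicit.
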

\begin{proof}
  It is obvious for $t \leqslant M$ as we only need to keep all items in the continuum.
  For $t>M$, the probability that the $n$-th item is still kept in the memory at time $t$ is
  \begin{equation}
  \begin{split}
  P_{n,t} &=  P_n(n) \cdot P_n(n+1)  \cdots P_n(t-1) \cdot P_n(t), \\
  &= \frac{M}{n} \cdot \frac{n}{n+1} \cdot  \cdots \frac{t-2}{t-1} \cdot \frac{t-1}{t},\\
  &= \frac{M}{t}.
  \end{split}
  \end{equation}
  This means the probability $P_{n,t}$ is irrelevant to $n$ and all items in the continuum share the same probability. In practice, we always keep $M$ items and sample balanced items accross classes.
\end{proof}

\section{Distributed Human Action Recognition}\label{app:action}

\myparagraph{Implementation} In practice, the temporal growing graph can only be learned sequentially, thus we take the first 80\% of each sequence for training and the remaining 20\% for testing.
Specifically, we define the radius of neighborhood as the temporal distance. Therefore, all the nodes at the same instant are $1$-hop neighbors of each other.
For each feature graph we have $K=2$ in the continuum \eqref{eq:continuum}.
We construct FGN using two feature transform layers \eqref{eq:feature-transform} with attention weights \eqref{eq:attention-weight} and one fully connected layer to predict for the sub-graph classification.
For fairness, we use $C_{(1)}=5$, $F_{(1)}=25$, $C_{(2)}=32$, and $F_{(2)}=12$ for all models.
In the experiments, we find that GCN, APPNP obtain the best overall performance using the SGD optimizer, while MLP, GAT, and FGN performed the best using the Adam optimizer.

\myparagraph{Running time}
We also report the average running time for the models in \tref{tab:action-runtime}.
Note that the efficiency of FGN is on par with other methods.
Considering that FGN has a much better performance, we believe that FGN is more promising.

\begin{table}[h]
  \caption{Running time comparison on the action recognition.}
  \vspace{-10pt}
  \label{tab:action-runtime}
  \begin{center}
  \small
  \begin{threeparttable}
    \setlength{\tabcolsep}{2.5pt}
    \begin{tabular}{ccccccc}
        \toprule
        & MLP & GCN & APPNP  & GAT &\textbf{FGN} \\
        \midrule
        Runtime (\milli\second) & 3.51 & 5.38 & 5.36 & 5.48 & 5.76 \\
        \bottomrule
    \end{tabular}
  \end{threeparttable} 
  \end{center}
\end{table}

\section{Image Feature Matching}
\label{sec:feature-matchinng}

Although many hand-crafted feature descriptors such as SIFT \cite{lowe1999object} and ORB \cite{rublee2011orb} have been proposed decades ago, their performance is still unsatisfied for large view point changes.
Due to the well generalization ability, deep learning-based feature detectors have received increasing attentions. For example, SuperPoint \cite{detone2018superpoint} introduced a self-supervised framework for extracting interest point detectors and descriptors. SuperGlue \cite{sarlin20superglue} introduced graph attention model into SuperPoint for feature matching. 

\myparagraph{Implementation}
In the experiments, we adopt $C = 1$, $K=1$, $F = 256$ in both FGN and FGN for fairness.
The training loss function is adapted from \cite{novotny2018self} which maximizes the likelihood of predicting similar node embeddings corresponding to their spatial location. We recommend the readers refer to SuperPoint \cite{detone2018superpoint}, SuperGlue \cite{sarlin20superglue}, and \cite{novotny2018self} for more details of the loss functions.

\myparagraph{Dataset}
We perform training and evaluation on the TartanAir dataset \cite{wang2020tartanair}.
TartanAir is a large (about 3TB) and very challenging visual SLAM dataset consisting of binocular RGB-D video sequences together with additional per-frame information such as camera poses, optical flow, and semantic annotations.
The sequences are rendered in AirSim \cite{airsim2017fsr}, a photo-realistic simulator, which features modeled environments with various themes including urban, rural, nature, domestic, public, sci-fi, \etc
\fref{fig:matching-demo} contains several example video frames from TartanAir.
The dataset is collected such that it covers challenging viewpoints and diverse motion patterns.
In addition, the dataset also includes other traditionally challenging factors in SLAM tasks such as moving objects, changing lighting conditions, and extreme weather.
We randomly select 80\% of the sequences for training and take the remaining for testing.
We recommend the readers refer to \cite{wang2020tartanair} for more details of the dataset.

\section{Limitation}

Although we have shown that FGN outperformed the state-of-the-art methods in node classification, sub-graph classification, and edge prediction, it also has several limitations.
\textbf{First}, our current implementation is not vectorized for taking a varying number of neighbors, which is less computationally efficient.
\textbf{Second}, in the experiments, we assumed scalar edge weights, while in the general case, the graph edge weights are represented by a vector, as defined in the feature graphs.
\textbf{Third}, since our main contribution is the novel graph topology, \ie, feature graph, we mainly compared it with the the state-of-the-art graph models such as GAT by applying an off-the-shelf lifelong learning algorithm.
However, it might also be applicable to other lifelong learning algorithms.
In the future, we plan to fully optimize the codes, extend it to applications with vector edge weights, and apply more lifelong learning algorithms.

\end{document}